\newtheorem{lemma}{Lemma}
\newtheorem{theorem}{Theorem}
\theoremstyle{remark}
\newtheorem*{remark}{Remark}
\newtheorem{definition}{Definition}
\newcommand{\bsa}{{\boldsymbol{a}}}
\newcommand{\bsx}{{\boldsymbol{x}}}
\newcommand{\cx}{{\cal X}}
\newcommand{\cs}{{\cal S}}
\newcommand{\cc}{{\cal C}}
\newcommand{\cp}{{\cal P}}
\newcommand{\ct}{{\cal T}}
\newcommand{\cu}{{\cal U}}
\newcommand{\cv}{{\cal V}}
\newcommand{\real}{{\mathbb R}}
\title{Large-Scale Quadratically Constrained Quadratic Program via
  Low-Discrepancy Sequences}
\author{
  Kinjal Basu, ~~Ankan Saha, ~~Shaunak Chatterjee \\
  LinkedIn Corporation\\
  Mountain View, CA 94043\\
  \texttt{\{kbasu, asaha, shchatte\}@linkedin.com} \\
}
\begin{document}

\maketitle

\begin{abstract}
We consider the problem of solving a large-scale Quadratically
Constrained Quadratic Program. Such problems occur naturally in many
scientific and web applications. Although there are efficient methods
which tackle this problem, they are mostly not scalable. In this
paper, we develop a method that transforms the quadratic constraint
into a linear form by sampling a set of low-discrepancy points
\cite{dick:pill:2010}. The transformed problem can then be solved by
applying any state-of-the-art large-scale quadratic programming
solvers. We show the convergence of our approximate solution to the
true solution as well as some finite sample error bounds. Experimental
results are also shown to prove scalability as well as improved
quality of approximation in practice.
\end{abstract}

\section{Introduction}
\label{sec:intro}
\vspace{-0.5em}
In this paper we consider the class of problems called quadratically
constrained quadratic programming (QCQP) which take the following
form:
\begin{align}
\label{eq:QCQP}
\nonumber
& \underset{\xb}{\text{Minimize}} \qquad \frac{1}{2} \xb^T\Pb_0\xb + \qb_0^T\xb + r_0 \\
& \text{subject to}  \qquad \frac{1}{2} \xb^T\Pb_i\xb + \qb_i^T\xb + r_i \leq 0, \qquad i = 1,\ldots, m\\
\nonumber
&  \qquad \qquad \qquad \Ab\xb = \bb,
\end{align}
where $\Pb_0, \ldots, \Pb_m$ are $n \times n$ matrices. If each of
these matrices are positive definite, then the optimization problem is
convex. In general, however, solving QCQP is NP-hard, which can be
verified by easily reducing a $0-1$ integer programming problem (known
to be NP-hard) to a QCQP. In spite of that challenge, they form an
important class of optimization problems, since they arise naturally
in many engineering, scientific and web applications. Two famous
examples of QCQP include the max-cut and boolean optimization
\cite{boyd2004convex}. Other examples include alignment of kernels in
semi-supervised learning \cite{ZhuKanLafGha05}, learning the kernel
matrix in discriminant analysis \cite{YeJC07} as well as more general
learning of kernel matrices \cite{LanckrietCBGJ02}, steering direction
estimation for radar detection \cite{de2011fractional}, several
applications in signal processing \cite{huang2014randomized}, the
triangulation in computer vision \cite{aholt2012qcqp}, etc.

Internet applications, handling large scale of data, often model
trade-offs between key utilities using constrained optimization
formulations \cite{agarwal2012personalized}. When there is
independence among the expected utilities (e.g., click, time spent,
revenue obtained) of items, the objective or the constraints
corresponding to those utilities are linear. However, in most real
scenarios, there is dependence among expected utilities of items
presented together on a web page or mobile app. Examples of such
dependence are abundant in newsfeeds, search result pages, etc. If
this dependence is expressed through a linear model, it makes the
corresponding objective and/or constraint quadratic. This makes the
constrained optimization problem a very large scale QCQP, if the
dependence matrix (often enumerated by a very large number of members
or updates) is positive definite \cite{basu2016constrained}.

Although there are a plethora of such applications, solving this
problem on a large scale is still extremely challenging. There are two
main relaxation techniques that are used to solve a QCQP, namely,
semi-definite programming (SDP) and reformulation-linearization
technique (RLT) \cite{boyd2004convex}. However, both of them introduce
a new variable $\Xb = \xb\xb^T$ so that the problem becomes linear in
$\Xb$. Then they relax the condition $\Xb = \xb\xb^T$ by different
means. Doing so unfortunately increases the number of variables from
$n$ to $O(n^2)$. This makes these methods prohibitively expensive for
most large scale applications. There is literature comparing these
methods which also provides certain combinations and
generalizations\cite{anstreicher2009semidefinite, bao2011semidefinite,
  lasserre2002semidefinite}. However, they all suffer from the same
curse of dealing with $O(n^2)$ variables. Even when the problem is
convex, there are techniques such as second order cone programming
\cite{nesterov1994interior}, which can be efficient, but scalability
still remains an important issue with prior QCQP solvers.

The focus of this paper is to introduce a novel approximate solution
to the convex QCQP which can tackle such large-scale situations. We
devise an algorithm which approximates the quadratic constraints by a
set of linear constraints, thus converting the problem into a
quadratic program (QP) \cite{boyd2004convex}. In doing so, we remain
with a problem having $n$ variables instead of $O(n^2)$. We then apply
efficient QP solvers such as Operator Splitting or ADMM
\cite{boyd2011distributed, parikh2014block} which are well adapted for
distributed computing, to get the final solution for problems of much
larger scale. We theoretically prove the convergence of our technique
to the true solution in the limit. We also provide experiments
comparing our algorithm to existing state-of-the-art QCQP solvers to
show scalability in practice. To the best of our knowledge, this
technique is new and has not been previously explored in the
optimization literature.

The rest of the paper is structured as follows. In Section
\ref{sec:problem_def}, we describe the approximate problem, important
concepts to understand the sampling scheme as well as the
approximation algorithm to convert the problem into a QP. Section
\ref{sec:convergence} contains the proof of convergence, followed by
the experimental results in Section \ref{sec:expt_results}. Finally,
we conclude with some discussion in Section \ref{sec:disc}.

\section{QCQP to QP Approximation}
\label{sec:problem_def}
\vspace{-0.5em}
For sake of simplicity throughout the paper, we deal with a QCQP
having a single quadratic constraint. The procedure detailed in this
paper can be easily generalized to multiple constraints. Thus, for the
rest of the paper, without loss of generality we consider the problem
of the form,
\begin{align}
\label{eq:QCQPsmall}
\nonumber & \underset{\xb}{\text{Minimize}} \qquad (\xb -
\ab)^T\Ab (\xb - \ab) &\\ 
& \text{subject to} \qquad (\xb - \bb)^T\Bb(\xb - \bb) \leq \tilde{b}, &\\ 
\nonumber & \qquad \qquad \qquad \Cb\xb = \cbb. &
\end{align}
This is a special case of the general formulation in
\eqref{eq:QCQP}. For this paper, we restrict our case to $\Ab$ and
$\Bb$ being positive definite matrices so that the objective function
is strongly convex.

In this section, we describe the linearization technique to convert
the quadratic constraint into a set of $N$ linear constraints. The
main idea behind this approximation, is the fact that given any convex
set in the Euclidean plane, there exists a convex polytope that covers
the set. Let us begin by introducing a few notations. Let $\cp$ denote
the optimization problem \eqref{eq:QCQPsmall}. Define,
\begin{align}
\label{eq:qc}
\cs := \{ \xb \in \mathbb{R}^n : (\xb - \bb)^T\Bb(\xb - \bb) \leq
\tilde{b} \}.
\end{align}
Let $\partial\cs$ denote the boundary of the ellipsoid $\cs$. To
generate the $N$ linear constraints for this one quadratic constraint,
we generate a set of $N$ points, $\cx_N = \{\xb_1, \ldots, \xb_N\}$
such that each $\xb_j \in \partial\cs$ for $j = 1,\ldots, N$ . The
sampling technique to select the point set is given in Section
\ref{sec:sampling}. Corresponding to these $N$ points we get the
following set of $N$ linear constraints,
\begin{align}
\label{eq:lin_const}
(\xb - \bb)^T \Bb(\xb_j - \bb) \leq \tilde{b} \qquad \text{ for } j = 1, \ldots, N.
\end{align}
Looking at it geometrically, it is not hard to see that each of these
linear constraints are just tangent planes to $\cs$ at $\xb_j$ for $j
= 1, \ldots, N$. Figure \ref{fig:qcqp_qp_lin} shows a set of six
linear constraints for a ellipsoidal feasible set in two
dimensions. Thus, using these $N$ linear constraints we can write the
approximate optimization problem, $\cp(\cx_N)$, as follows.
\begin{align}
\label{eq:QP_via_lin}
\nonumber
& \underset{\xb}{\text{Minimize}} \qquad  (\xb - \ab)^T\Ab (\xb - \ab) &\\
& \text{subject to} \qquad (\xb - \bb)^T\Bb(\xb_j - \bb)  \leq \tilde{b}\qquad \text{ for } j = 1, \ldots, N &\\
\nonumber
& \qquad \qquad \qquad \Cb\xb = \cbb. &
\end{align}
Now instead of solving $\cp$, we solve $\cp(\cx_N)$ for a large enough
value of $N$. Note that as we sample more points, our approximation
keeps getting better. 
\begin{figure}[!h]
\centering
\includegraphics[scale = 0.6]{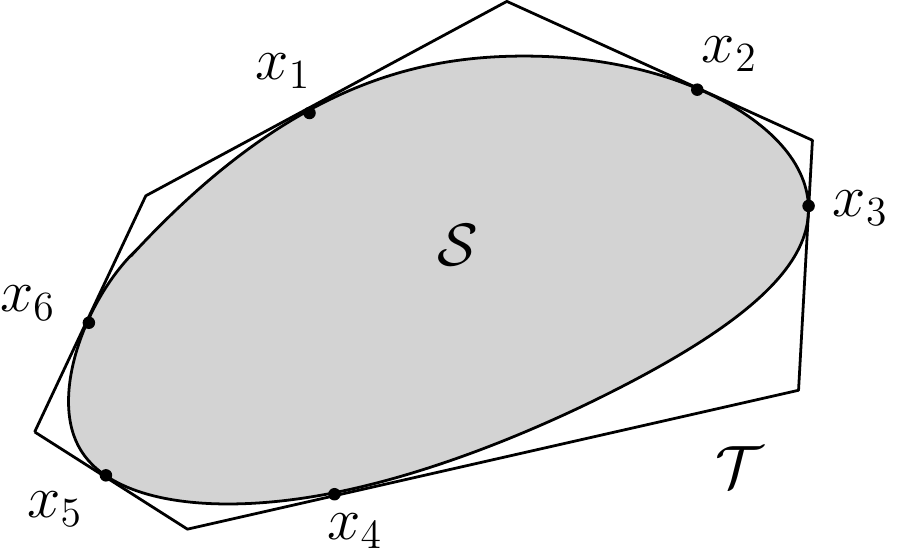}
\caption{\label{fig:qcqp_qp_lin} Converting a quadratic constraint
  into linear constraints. The tangent planes through the 6 points
  $\xb_1, \ldots, \xb_6$ create the approximation to $\cs$.}
\end{figure}
\vspace{-1em}
\subsection{Sampling Scheme}
\label{sec:sampling}
The accuracy of the solution of $\cp(\cx_N)$ solely depends on the choice
of $\cx_N$. The tangent planes to $\cs$ at those $N$ points
create a cover of $\cs$. We use the notion of a bounded cover, which
we define as follows.
\begin{definition}
Let $\ct$ be the convex polytope generated by the tangent planes to
$\cs$ at the points $\xb_1, \ldots, \xb_N \in \partial\cs$. $\ct$ is
said to be a bounded cover of $\cs$ if, $$d(\ct, \cs) := \sup_{\tb \in \ct} d(\tb,
\cs) < \infty,$$ where $d(\tb, \cs) := \inf_{\xb \in \cs} \norm{ \tb -
  \xb }$ and $\norm{\cdot}$ denotes the Euclidean distance.
\end{definition}
The first result shows that there exists a bounded cover with only
$n+1$ points.
\begin{lemma}
\label{lem:min_points}
Let $\cs$ be a $n$ dimensional ellipsoid as defined in
\eqref{eq:qc}. Then there exists a bounded cover with $n+1$ points. 
\end{lemma}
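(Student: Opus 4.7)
The plan is to exhibit an explicit configuration of $n+1$ tangent points on $\partial\cs$ whose tangent hyperplanes form a bounded simplex containing $\cs$. Since bounded cover only requires that $d(\ct,\cs)<\infty$, and $\cs$ is itself bounded, it suffices to construct $\cx_{n+1}$ such that the resulting polytope $\ct$ is bounded.

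First I would reduce to the case of the unit Euclidean ball. Since $\Bb$ is positive definite, write $\Bb = \Lb^T\Lb$ with $\Lb$ invertible, and apply the affine change of variables $\yb = \tilde{b}^{-1/2}\Lb(\xb-\bb)$. This maps $\cs$ bijectively onto the closed unit ball $\Bcal = \{\yb : \norm{\yb}\le 1\}$, maps $\partial\cs$ onto the unit sphere $\partial\Bcal$, and sends tangent hyperplanes of $\cs$ to tangent hyperplanes of $\Bcal$. Affine maps also preserve boundedness of polytopes, so constructing a bounded cover for $\Bcal$ using $n+1$ tangent points yields one for $\cs$ with the same number of points, via the inverse transform.

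Next I would choose the $n+1$ points on $\partial\Bcal$ to be the vertices $\vb_1,\ldots,\vb_{n+1}$ of a regular simplex inscribed in the unit sphere, centered at the origin, so that $\sum_{i=1}^{n+1}\vb_i = \zero$ and $\{\vb_i\}$ spans $\RR^n$. The tangent hyperplane at $\vb_i$ is $\{\yb : \vb_i^T\yb \le 1\}$, so the candidate cover is
\begin{equation*}
\ct := \bigcap_{i=1}^{n+1} \{\yb \in \RR^n : \vb_i^T\yb \le 1\}.
\end{equation*}
Clearly $\Bcal \subset \ct$ (since $\vb_i^T\yb \le \norm{\vb_i}\norm{\yb} \le 1$ for $\yb\in\Bcal$), so $\ct$ covers $\Bcal$. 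The main thing to verify is that $\ct$ is bounded. I would argue this by showing the outward normals $\vb_1,\ldots,\vb_{n+1}$ positively span $\RR^n$: for any direction $\ub \neq \zero$, the identity $\sum_i \vb_i^T\ub = 0$ combined with $\ub \neq \zero$ (so some $\vb_i^T\ub \neq 0$) forces at least one index $i$ with $\vb_i^T\ub > 0$. Hence no ray $\{t\ub : t\ge 0\}$ starting from any fixed point of $\ct$ can stay inside $\ct$ for all $t$, so $\ct$ contains no ray and, being a closed convex polyhedron, is bounded.

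The main obstacle is the boundedness argument above — one has to be careful that $n$ tangent planes are never enough (each half-space contains a half-line in the opposite direction, and with only $n$ outward normals they cannot positively span $\RR^n$), whereas $n+1$ well-chosen normals can, and the regular simplex configuration is the cleanest way to guarantee this. Once boundedness of $\ct$ in the $\yb$-coordinates is established, pulling back through the affine map produces a bounded cover of $\cs$ with tangent points $\xb_i = \bb + \tilde{b}^{1/2}\Lb^{-1}\vb_i \in \partial\cs$, completing the proof.
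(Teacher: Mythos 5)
Your proof is correct and takes essentially the same route as the paper: both arguments produce a simplex circumscribed about the ellipsoid whose $n+1$ facets are tangent hyperplanes, the paper by shrinking a containing copy of the standard simplex until each facet touches $\cs$, and you by reducing to the unit ball and taking tangent planes at the vertices of an inscribed regular simplex. Your version is somewhat more explicit in that it actually verifies boundedness via the positive-spanning property of the $n+1$ outward normals, a step the paper leaves implicit since its polytope is a simplex by construction.
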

\vspace{-0.5em}
\begin{proof}
Note that since $\cs$ is a compact convex body in $\real^n$, there
exists a shifted copy of an $n$-dimensional simplex $T = \{ \bsx \in
\real_+^n : \sum_{i=1}^n x_i \leq K \}$ such that $ \cs \subseteq
T$. We can always shrink $T$ such that each edge touches $\cs$
tangentially. Since there are $n+1$ faces, we will get $n+1$ points
whose tangent surface creates a bounded cover.
\end{proof}
\vspace{-0.2em}
Although Lemma \ref{lem:min_points} gives a simple constructive proof
of a bounded cover, it is not what we are truly interested in. What we
want is to construct a bounded cover $\ct$ which is as close as
possible to $\cs$, thus leading to a better approximation. However note that, choosing the points via a naive sampling can lead to arbitrarily bad enlargements of the feasible set and in the worst case might even create a cover which is not bounded. Hence we need an optimal set of points which creates an optimal bounded cover. Formally,
\begin{definition}
$\ct^* = \ct(\xb_1^*, \ldots, \xb_N^*)$ is said to be an optimal
  bounded cover, if
\begin{align*}
\sup_{\tb \in \ct^*} d(\tb, \cs)  \leq \sup_{\tb \in \ct} d(\tb, \cs)
\end{align*}
for any bounded cover $\ct$ generated by any other $N$-point
sets. Moreover, $\{\xb_1^*, \ldots, \xb_N^*\}$ are defined to be the
optimal $N$-point set.
\end{definition}
Note that we can think of the optimal $N$-point set as that set of $N$
points which minimize the maximum distance between $\ct$ and
$\cs$, i.e. 
$$
\ct^* = \argmin_{\ct} d(\ct, \cs).
$$ 
It is not hard to see that the optimal $N$-point set on the unit
circle in two dimensions are the $N$-th roots of unity, unique up to
rotation. This point set also has a very good property. It has been
shown that the $N$-th roots of unity minimize the discrete Riesz
energy for the unit circle \cite{gotz}. The concept of Reisz energy
also exists in higher dimensions. Thus, generalizing this result, we
choose our optimal $N$-point set on $\partial\cs$ which tries to
minimize the Reisz energy. We briefly describe it below.
\subsubsection{Riesz Energy}
Riesz energy of a point set $A_N = \{\xb_1, \ldots, \xb_N\}$ is
defined as 
$E_s(A_N) := \sum_{i \neq j = 1}^N \norm{\xb_i - \xb_j}^{-s}$ 
for a positive real parameter $s$. There is a vast literature on Riesz
energy and its association with ``good'' configuration of points. It
is well known that the measures associated to the optimal point set
that minimizes the Riesz energy on $\partial\cs$ converge to the
normalized surface measure of $\partial\cs$ \cite{gotz}. Thus using this fact, we can associate the optimal $N$-point
set to the set of $N$ points that minimize the Riesz energy on
$\partial\cs$. For more details see \cite{grabner, hardin2005minimal} and the
references therein. To describe these good configurations of points,
we introduce the concept of equidistribution. We begin with a ``good''
or equidistributed point set in the unit hypercube (described in
Section \ref{sec:equidistribution}) and map it to $\partial\cs$ such
that the equidistribution property still holds (described in Section
\ref{sec:mapping}).
\subsubsection{Equidistribution}
\label{sec:equidistribution}
Informally, a set of points in the unit hypercube is said to be
equidistributed, if the expected number of points inside any
axis-parallel subregion, matches the true number of points. One such
point set in $[0,1]^n$ is called the $(t, m, n)$-net in base
$\eta$, which is defined as a set of $N = \eta^m$ points in $[0,1]^n$ such that any
axis parallel $\eta$-adic box with volume $\eta^{t - m}$ would contain
exactly $\eta^t$ points. Formally, it is a point set that can attain
the optimal integration error of $O((\log(N))^{n-1}/N)$ and is usually
referred to as a \emph{low-discrepancy} point set. There is vast
literature on easy construction of these point sets. For more details
on nets we refer to \cite{dick:pill:2010, nied92}.
%
%
\subsubsection{Area preserving map to $\partial\cs$}
\label{sec:mapping}
Now once we have a point set on $[0,1]^n$ we try to map it to
$\partial\cs$ using a measure preserving transformation so that the
equidistribution property remains intact. We describe the mapping in
two steps. First we map the point set from $[0,1]^n$ to the
hyper-sphere $\mathbb{S}^n = \{ \xb \in \real^{n+1} : \xb^T\xb =
1\}$. Then we map it to $\partial\cs$. The mapping from $[0,1]^n$ to
$\mathbb{S}^n$ is based on \cite{brauchart2012quasi}.

The cylindrical coordinates of the $n$-sphere, can be written as
\begin{align*}
\xb = \xb_n = ( \sqrt{1 - t_n^2} \xb_{n-1}, t_n), \;\; 
\ldots \;\;,\;
\xb_2 = (\sqrt{1 - t_2^2} \xb_1, t_2), \; 
\xb_1 = (\cos \phi, \sin \phi)
\end{align*}
where $0 \leq \phi \leq 2\pi, -1 \leq t_d \leq 1, \xb_d \in
\mathbb{S}^d$ and $d = (1,\ldots, n)$. Thus, an arbitrary point $\xb
\in \mathbb{S}^n$ can be represented through angle $\phi$ and heights
$t_2, \ldots, t_n$ as,
\begin{align*}
\xb = \xb(\phi, t_2, \ldots, t_n), \qquad 0 \leq \phi \leq 2\pi, -1 \leq t_2, \ldots, t_n \leq 1.
\end{align*}
We map a point $\yb = (y_1, \ldots, y_n) \in [0,1)^n$ to $\xb \in \mathbb{S}^n$ using
\begin{align*}
\varphi_1(y_1) = 2\pi y_1, \qquad \varphi_d(y_d) = 1 - 2y_d \;\;\;(d = 2, \ldots, n)
\end{align*}
and cylindrical coordinates $\xb = \Phi_n(\yb) = \xb( \varphi_1(y_1),
\varphi_2(y_2), \ldots, \varphi_n(y_n))$.  The fact that $\Phi_n :[0,1)^n \rightarrow \mathbb{S}^n$ is an area
  preserving map has been proved in \cite{brauchart2012quasi}.

\begin{remark}
Instead of using $(t,m,n)$-nets and mapping to $\mathbb{S}^n$, we
could have also used spherical $t$-designs, the existence of which was
proved in \cite{bondarenko2013optimal}. However, construction of such
sets is still a tough problem in high dimensions. We refer to
\cite{brauchart2015distributing} for more details.
\end{remark}
Finally, we consider the map $\psi$ to translate the point set from
$\mathbb{S}^{n-1}$ to $\partial\cs$. Specifically we define,
\begin{align}
\label{eq:psi}
\psi(\xb) =\sqrt{\tilde{b}}B^{-1/2}\xb + b.
\end{align}
From the definition of $\cs$ in \eqref{eq:qc}, it is easy to see that $\psi(\xb) \in \partial\cs$.
The next result shows that this also an area-preserving map, in the
sense of normalized surface measures.
\begin{lemma}
\label{lem:pres_2}
Let $\psi$ be a mapping from $\mathbb{S}^{n-1} \rightarrow\partial\cs$
as defined in \eqref{eq:psi}. Then for any set $A \subseteq
\partial\cs$, $$\sigma_n(A) = \lambda_n(\psi^{-1}(A))$$ where,
$\sigma_n, \lambda_n$ are the normalized surface measure of
$\partial\cs$ and $\mathbb{S}^{n-1}$ respectively.
\end{lemma}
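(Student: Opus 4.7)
The plan is to reduce the claim to a bijection check plus an application of the change-of-variables formula for Hausdorff measures, after correctly identifying $\sigma_n$ as the push-forward of $\lambda_n$ through $\psi$.

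First, I would verify that $\psi$ is a bijection from $\mathbb{S}^{n-1}$ onto $\partial\cs$. For any $\xb \in \mathbb{S}^{n-1}$, direct substitution into the defining quadratic form of $\cs$ in \eqref{eq:qc} gives
\begin{align*}
(\psi(\xb) - \bb)^T \Bb\, (\psi(\xb) - \bb) \;=\; \tilde{b}\, \xb^T \Bb^{-1/2} \Bb\, \Bb^{-1/2} \xb \;=\; \tilde{b}\,\|\xb\|^2 \;=\; \tilde{b},
\end{align*}
so $\psi(\xb) \in \partial\cs$. Because $\Bb^{1/2}$ is invertible, the explicit inverse $\psi^{-1}(\yb) = \Bb^{1/2}(\yb - \bb)/\sqrt{\tilde b}$ exists and maps $\partial\cs$ back into $\mathbb{S}^{n-1}$ (the same computation read in reverse), so $\psi$ is a smooth bijection.

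Next, I would invoke the area formula. The differential $D\psi(\xb) = \sqrt{\tilde b}\,\Bb^{-1/2}$, restricted to the tangent space $T_\xb \mathbb{S}^{n-1} = \xb^{\perp}$, is a linear isomorphism onto $T_{\psi(\xb)} \partial\cs$ with associated Jacobian $J_\psi(\xb)$ equal to the Gram determinant of this restriction. The change-of-variables formula then gives, for any Borel $A \subseteq \partial\cs$,
\begin{align*}
\int_A d\mathcal{H}^{n-1} \;=\; \int_{\psi^{-1}(A)} J_\psi(\xb)\, d\mathcal{H}^{n-1}(\xb).
\end{align*}
Identifying the \emph{normalized} surface measure $\sigma_n$ on $\partial\cs$ with the push-forward $\psi_{\ast} \lambda_n$ (which is the measure that actually inherits the equidistribution property from $\lambda_n$ through $\psi$, and is what the subsequent sampling argument needs), the identity $\sigma_n(A) = \lambda_n(\psi^{-1}(A))$ then holds by the very definition of a push-forward.

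The main obstacle I foresee is a definitional one: unless $\Bb$ is a scalar multiple of the identity, $J_\psi(\xb)$ is not constant in $\xb$, so the literal Hausdorff-normalized surface area on $\partial\cs$ is \emph{not} preserved by $\psi$. The lemma therefore cannot be proved if $\sigma_n$ is read as $\mathcal{H}^{n-1}/\mathcal{H}^{n-1}(\partial\cs)$. To validate the statement as written one must adopt the interpretation $\sigma_n := \psi_{\ast}\lambda_n$ (equivalently, $\mathcal{H}^{n-1}$ reweighted by $J_\psi(\psi^{-1}(\cdot))^{-1}$ and renormalized). Once this identification is made explicit, the proof collapses to the bijection check above together with the elementary push-forward identity.
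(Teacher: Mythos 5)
Your argument is sound and, more importantly, it isolates exactly the step that the paper's own proof does not justify. The paper computes $\psi^{-1}(A) = \{\Bb^{1/2}(\xb-\bb)/\sqrt{\tilde b} : \xb \in A\}$, drops the translation by $\bb$ (harmless), and then asserts that $\lambda_n(\{\Bb^{1/2}\xb/\sqrt{\tilde b} : \xb\in A\}) = \sigma_n(A)$ ``from the definition of normalized surface measures.'' That final equality is precisely the claim your Jacobian computation refutes when $\Bb$ is not a scalar multiple of the identity: the restriction of the constant linear map $\sqrt{\tilde b}\,\Bb^{-1/2}$ to the tangent spaces of $\mathbb{S}^{n-1}$ has a point-dependent Gram determinant, so ratios of $(n-1)$-dimensional Hausdorff measure are not preserved --- already for an ellipse in the plane, equal angular arcs of the circle map to arcs of very different lengths near the major and minor axes. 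So your proposal is not a different proof of the same fact; it is a correct demonstration that the lemma, read with $\sigma_n$ the normalized Hausdorff surface measure of $\partial\cs$, is false in general, and that the paper's proof assumes the conclusion at its last step. Your repair, defining $\sigma_n := \psi_{*}\lambda_n$, does make the displayed identity hold, but only by definition; the lemma then carries no content, and the real burden shifts to verifying that this reweighted measure is the one for which the later equidistribution and covering arguments (the Riesz-energy discussion and Theorem \ref{thm:converge}) remain valid. An alternative repair that preserves the intended meaning is to first apply the affine change of variables $\xb \mapsto \Bb^{1/2}(\xb-\bb)/\sqrt{\tilde b}$ to the entire QCQP, which turns the constraint set into the unit ball so that $\psi$ becomes the identity, at the cost of transforming $\Ab$ and the linear constraints.
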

\begin{proof}
Pick any $A \subseteq \partial\cs$. Then we can write, $$\psi^{-1}(A)
= \left\{ \frac{1}{\sqrt{\tilde{b}}} B^{1/2}(\xb - b) : \xb \in A
\right\}.$$ Now since the linear shift does not change the surface
area, we have,
\begin{align*}
\lambda_n(\psi^{-1}(A)) &= \lambda_n\left(\left\{
\frac{1}{\sqrt{\tilde{b}}}B^{1/2}(\xb - b) : \xb \in A \right\}\right)
= \lambda_n\left(\left\{ \frac{1}{\sqrt{\tilde{b}}}B^{1/2}\xb : \xb
\in A \right\}\right) = \sigma_n(A),
\end{align*}
where the last equality follows from the definition of normalized
surface measures. This completes the proof.
\end{proof}
Using Lemma \ref{lem:pres_2} we see that the map $\psi \circ \Phi_{n
  -1} : [0,1)^{n-1} \rightarrow \partial\cs,$ is a measure preserving
  map. Using this map and the $(t,m,n-1)$ net in base $\eta$, we
  derive the optimal $\eta^m$-point set on $\partial\cs$. Figure
  \ref{fig:net2sph2ell} shows how we transform a $(0,7,2)$-net in base
  2 to a sphere and then to an ellipsoid. For more general geometric
  constructions we refer to \cite{basu:owen:2015, basu2015scrambled}.
\begin{figure*}[!h]
\centering
\includegraphics[width=\linewidth]{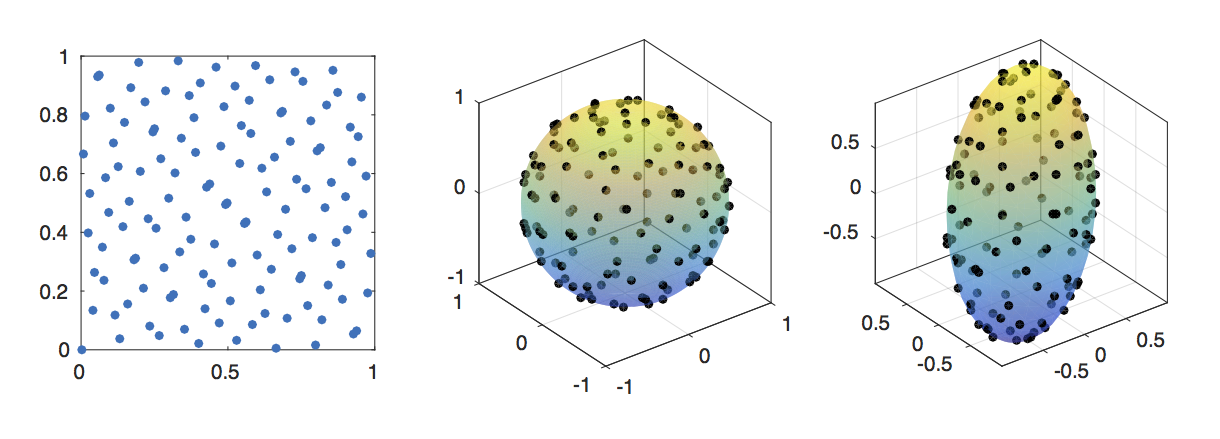}
\caption{\label{fig:net2sph2ell} The left panel shows a $(0, 7,
  2)$-net in base $2$ which is mapped to a sphere in 3 dimensions
  (middle panel) and then mapped to the ellipsoid as seen in the right
  panel.}
\end{figure*}
\subsection{Algorithm and Efficient Solution}
\label{sec:algo}
From the description in the previous section we are now at a stage to
describe the approximation algorithm. We approximate the problem $\cp$
by $\cp(\cx_N)$ using a set of points $\xb_1, \ldots, \xb_N$ as described
in Algorithm \ref{algo:simulate}.
\begin{algorithm}
\caption{Point Simulation on $\partial\cs$}\label{algo:simulate}
\begin{algorithmic}[1]
\State Input : $B, b, \tilde{b}$ to specify $\cs$ and $N = \eta^m$ points 
\State Output : $\xb_1, \ldots, \xb_N \in \partial\cs$
\State Generate $\yb_1, \ldots, \yb_N$ as a $(t,m,n-1)$-net in base $\eta$.
\For {$i \in 1, \ldots, N$}
\State $\xb_i = \psi \circ \Phi_{n-1}(\yb_i)$
\EndFor
\State \Return {$\xb_1, \ldots, \xb_N$}
\end{algorithmic}
\end{algorithm}
Once we formulate the problem $\cp$ as $\cp(\cx_N)$, we solve the large
scale QP via state-of-the-art solvers such as Operator Splitting or
Block Splitting approaches \cite{boyd2011distributed, o2016conic,
  parikh2014block}. 

\section{Convergence of $\cp(\cx_N)$ to $\cp$}
\label{sec:convergence}
\vspace{-0.5em}
In this section, we shall show that if we follow Algorithm
\ref{algo:simulate} to generate the approximate problem $\cp(\cx_N)$,
then we converge to the original problem $\cp$ as $N \rightarrow
\infty$. We shall also prove some finite sample results to give error
bounds on the solution to $\cp(\cx_N)$. We start by introducing some
notation.

Let $\xb^*, \xb^*(N)$ denote the solution to $\cp$ and $\cp(\cx_N)$
respectively and $f(\cdot)$ denote the strongly convex objective
function in \eqref{eq:QCQPsmall}, i.e., for easy of notation
$$f(\xb) = (\xb -\ab)^T\Ab (\xb - \ab). 
$$
 We begin with our main result.
\begin{theorem}
\label{thm:converge}
Let $\cp$ be the QCQP defined in \eqref{eq:QCQPsmall} and $\cp(\cx_N)$ be
the approximate QP problem defined in \eqref{eq:QP_via_lin} via
Algorithm \ref{algo:simulate}. Then, $\cp(\cx_N) \rightarrow \cp$ as $N
\rightarrow \infty$ in the sense that $\lim_{N \rightarrow \infty}
\norm{\xb^*(N) - \xb^*} = 0.$
\end{theorem}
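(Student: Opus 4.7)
The plan is to combine three ingredients: (i) the polytope $\ct_N$ is by construction an outer approximation to $\cs$, so $\cp(\cx_N)$ is a relaxation of $\cp$; (ii) the equidistribution guaranteed by the low-discrepancy construction in Algorithm~\ref{algo:simulate} forces the sample points to become dense on $\partial\cs$, which drives the Hausdorff distance $d(\ct_N,\cs)$ to zero; and (iii) strong convexity of $f$ upgrades convergence of feasible sets to convergence of minimizers.

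First I would observe that each linear constraint $(\xb-\bb)^T\Bb(\xb_j-\bb)\le\tilde b$ is precisely the supporting half-space to the ellipsoid $\cs$ at $\xb_j\in\partial\cs$. Consequently $\cs\subseteq\ct_N$, and intersecting with the affine subspace $\{\Cb\xb=\cbb\}$ gives $F\subseteq F_N$ for the feasible sets of $\cp$ and $\cp(\cx_N)$ respectively. In particular $f(\xb^*(N))\le f(\xb^*)$ for every $N$, which already shows that the sequence $\{\xb^*(N)\}$ lies in a sublevel set of the strongly convex $f$ and is therefore bounded.

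Next I would show that $d(\ct_N,\cs)\to 0$. Since $\{\yb_i\}$ is a $(t,m,n-1)$-net in base $\eta$ and $\psi\circ\Phi_{n-1}$ is a measure-preserving map onto $\partial\cs$ (Lemma~\ref{lem:pres_2} together with the result of \cite{brauchart2012quasi}), the pushed-forward point set is asymptotically uniformly distributed on $\partial\cs$. In particular, for every $\xb\in\partial\cs$, $\min_j\|\xb-\xb_j\|\to 0$ as $N\to\infty$. Combined with the bounded curvature of $\partial\cs$ (guaranteed by positive definiteness of $\Bb$), a local quadratic expansion of $\partial\cs$ about each $\xb_j$ shows that if the $\xb_j$'s are $\varepsilon$-dense on $\partial\cs$, then the tangent polytope lies within Hausdorff distance $O(\varepsilon^2)$ of $\cs$; hence $d(\ct_N,\cs)\to 0$.

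Finally I would close the argument by a subsequence/uniqueness argument. Extract any convergent subsequence $\xb^*(N_k)\to\bar\xb$. Because $\xb^*(N_k)\in\ct_{N_k}$ with $d(\ct_{N_k},\cs)\to 0$, and because $\Cb\xb^*(N_k)=\cbb$ passes to the limit by continuity, $\bar\xb$ is feasible for $\cp$. Continuity of $f$ combined with $f(\xb^*(N_k))\le f(\xb^*)$ yields $f(\bar\xb)\le f(\xb^*)$, and strong convexity of $f$ (which makes the minimizer of $\cp$ unique) forces $\bar\xb=\xb^*$. Since every convergent subsequence of the bounded sequence $\{\xb^*(N)\}$ has limit $\xb^*$, the full sequence converges to $\xb^*$. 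The main obstacle is the geometric step $d(\ct_N,\cs)\to 0$: translating a combinatorial discrepancy bound for the preimage net into a Hausdorff bound on the tangent polytope requires controlling how the mapping $\psi\circ\Phi_{n-1}$ distorts local neighborhoods, and then quantifying the gap between a tangent hyperplane and the ellipsoid in terms of the nearest sample point.
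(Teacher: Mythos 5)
Your proposal is correct and follows the same basic strategy as the paper: exploit that each linear constraint is a supporting half-space (so $\cs\subseteq\ct_N$ and $\cp(\cx_N)$ is a relaxation), and use the equidistribution of the mapped net to force the covers $\ct_N$ to collapse onto $\cs$. The differences are in how the two halves are executed, and in both places your version is the more complete one. For the geometric step, the paper argues by contradiction: a point $t_0\in\lim_N\ct_N\setminus\cs$ would have a projection $t_1\in\partial\cs$, and equidistribution eventually places a sample point in any small cap around $t_1$ whose tangent plane separates $t_0$ from $\ct_N$; you instead give a quantitative version ($\varepsilon$-density of the samples plus bounded curvature of the ellipsoid yields $d(\ct_N,\cs)=O(\varepsilon^2)$), which is stronger but requires the distortion control you flag as the main obstacle. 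For the final step, the paper simply asserts that convergence of the feasible sets makes ``the two problems match''; your subsequence argument --- boundedness of $\{\xb^*(N)\}$ from the sublevel-set containment, feasibility of any limit point since $d(\ct_{N_k},\cs)\to 0$, and uniqueness of the minimizer from strong convexity of $f$ --- is exactly the missing link that turns set convergence into $\norm{\xb^*(N)-\xb^*}\to 0$, and is the more rigorous way to close the proof. In short, same skeleton, but you supply two pieces of reasoning the paper leaves implicit.
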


\begin{proof}
Fix any $N$. Let $\ct_{N}$ denote the optimal bounded cover
constructed with $N$ points on $\partial\cs$. Note that to prove the
result, it is enough to show that $\ct_{N} \rightarrow \cs$ as $N
\rightarrow \infty$. This guarantees that linear constraints of
$\cp(\cx_N)$ converge to the quadratic constraint of $\cp$, and hence the
two problems match. Now since $\cs \subseteq \ct_N$ for all $N$, it is
easy to see that $\cs \subseteq \lim_{N \rightarrow \infty} \ct_N.$

To prove the converse, let $t_0 \in \lim_{N \rightarrow \infty} \ct_N$
but $t_0 \not \in \cs$. Thus, $d(t_0, \cs) > 0$. Let $t_1$ denote the
projection of $t_0$ onto $\cs$. Thus, $t_0 \neq t_1 \in
\partial\cs$. Choose $\epsilon$ to be arbitrarily small and consider
any region $A_\epsilon$ around $t_1$ on $\partial\cs$ such that
$d(x,t_1) \leq \epsilon$ for all $x \in A_\epsilon$. Here $d$ denotes
the surface distance function. Now, by the equidistribution property
of Algorithm \ref{algo:simulate} as $N \rightarrow \infty$, there
exists a point $t^* \in A_\epsilon$, the tangent plane through which
cuts the plane joining $t_0$ and $t_1$. Thus, $t_0 \not\in \lim_{N
  \rightarrow \infty} \ct_N$. Hence, we get a contradiction and the
result is proved.
\end{proof}

As a simple Corollary to Theorem \ref{thm:converge} it is easy to see
that as $\lim_{N \rightarrow \infty} \left| f(\xb^*(N)) -
f(\xb^*)\right| = 0.$ We now move to some finite sample results.

\begin{theorem}
\label{thm:upp_bound}
Let $g : \mathbb{N} \rightarrow \real$ such that $\lim_{n \rightarrow
  \infty} g(n) = 0.$ Further assume that $\norm{\xb^*(N) - \xb^*} \leq
C_1 g(N)$ for some constant $C_1 > 0$. Then, $\left| f(\xb^*(N)) -
f(\xb^*) \right| \leq C_2 g(N)$ where $C_2 > 0$ is a constant.
\end{theorem}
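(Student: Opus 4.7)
The plan is to use local Lipschitz continuity of the strongly convex quadratic objective $f$. Since $f(\xb) = (\xb-\ab)^T \Ab (\xb-\ab)$ is a polynomial of degree two in $\xb$, its gradient $\nabla f(\xb) = 2\Ab(\xb-\ab)$ is linear, so $f$ is Lipschitz on any bounded set and the Lipschitz constant there can be expressed explicitly in terms of $\Ab$, $\ab$, and the diameter of the set.

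First, I would use the algebraic identity
\begin{equation*}
f(\xb^*(N)) - f(\xb^*) = (\xb^*(N)-\xb^*)^T \Ab \bigl(\xb^*(N) + \xb^* - 2\ab\bigr),
\end{equation*}
which follows directly from expanding both quadratic forms. Applying Cauchy--Schwarz (with the operator norm $\|\Ab\|$) yields
\begin{equation*}
|f(\xb^*(N)) - f(\xb^*)| \le \|\Ab\|\,\|\xb^*(N) - \xb^*\|\,\|\xb^*(N) + \xb^* - 2\ab\|.
\end{equation*}
The first factor is already controlled by the hypothesis, $\|\xb^*(N) - \xb^*\| \le C_1 g(N)$.

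Next I would bound the remaining factor uniformly in $N$. Using the triangle inequality,
\begin{equation*}
\|\xb^*(N) + \xb^* - 2\ab\| \le \|\xb^*(N) - \xb^*\| + 2\|\xb^* - \ab\| \le C_1 g(N) + 2\|\xb^* - \ab\|.
\end{equation*}
Because $g(N) \to 0$ the sequence $\{g(N)\}$ is bounded, say $g(N) \le M$ for all $N$, so the right-hand side is at most the constant $K := C_1 M + 2\|\xb^*-\ab\|$. Combining the two bounds gives
\begin{equation*}
|f(\xb^*(N)) - f(\xb^*)| \le \|\Ab\|\,C_1 K\, g(N) =: C_2 g(N),
\end{equation*}
which is the desired inequality with $C_2 = C_1 K \|\Ab\|$.

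There is no real obstacle here; the argument is essentially that a $C^1$ function is locally Lipschitz on any bounded neighbourhood of $\xb^*$, and the hypothesis $\|\xb^*(N)-\xb^*\| \le C_1 g(N)$ with $g(N)\to 0$ confines the iterates to such a neighbourhood. The only point requiring a little care is making the constant $C_2$ independent of $N$, which is handled by the uniform bound $g(N) \le M$ above. Note also that $\|\xb^* - \ab\|$ is a fixed (problem-dependent) constant, so $C_2$ depends only on $\Ab$, $\ab$, $\xb^*$, $C_1$, and $M$, as required.
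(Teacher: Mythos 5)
Your proof is correct. The underlying decomposition is in fact the same as the paper's: expanding your factorization $f(\xb^*(N)) - f(\xb^*) = (\xb^*(N)-\xb^*)^T \Ab (\xb^*(N) + \xb^* - 2\ab)$ gives exactly the linear term $\langle \nabla f(\xb^*), \xb^*(N)-\xb^*\rangle$ plus the quadratic remainder $(\xb^*(N)-\xb^*)^T\Ab(\xb^*(N)-\xb^*)$, which is precisely what the paper obtains via a first-order Taylor expansion with integral remainder and a Cauchy--Schwarz bound on the remainder. Your route is the more elementary one for a quadratic $f$ --- the exact algebraic identity (valid since $\Ab$ is symmetric positive definite) avoids the integral machinery entirely --- while the paper's argument would generalize to any objective with Lipschitz gradient. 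The one substantive difference is in how the constant is made uniform: you absorb $\norm{\xb^* - \ab}$ into $C_2$ as a fixed problem-dependent quantity, which suffices for the statement as written, whereas the paper additionally derives the explicit bound $\norm{\xb^*} \leq \norm{\bb} + \sqrt{\tilde{b}/\sigma_{\min}(\Bb)}$ from feasibility of $\xb^*$, so that its $C_2$ is expressed purely in terms of the problem data $(\Ab,\Bb,\ab,\bb,\tilde{b})$ rather than in terms of the unknown optimum. Both proofs rely on the boundedness of $\{g(N)\}$ (a consequence of $g(N)\to 0$) to control the quadratic term, and you make this step explicit where the paper leaves it implicit.
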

\begin{proof}
We begin by bounding the $\norm{\xb^*}$. Note that since $\xb^*$
satisfies the constraint of the optimization problem, we have,
$\tilde{b} \geq (\xb^* - \bb)^T\Bb(\xb^* - \bb) \geq
\sigma_{\min}(\Bb) \norm{\xb^* - \bb}^2$, where $\sigma_{\min}(\Bb)$
denotes the smallest singular value of $\Bb$. Thus,
\begin{align}
\label{eq:norm_bound}
\norm{\xb^*} \leq \norm{\bb} + \sqrt{\frac{\tilde{b}}{\sigma_{\min}(\Bb)}}.
\end{align}
Now, since $f(\xb) = (\xb - \bsa)^T\Ab(\xb - \bsa)$ and
$\nabla f(\xb) = 2\Ab(\xb - \bsa)$, we can write
\begin{align*}
f(\xb) &= f(\xb^*) + \int_{0}^1 \langle \nabla f(\xb^* + t(\xb -
\xb^*)), \xb -\xb^* \rangle dt \\ &= f(\xb^*) + \langle \nabla
f(\xb^*), \xb - \xb^* \rangle + \int_{0}^1 \langle \nabla f(\xb^* +
t(\xb - \xb^*)) - \nabla f(\xb^*), \xb -\xb^* \rangle dt\\ &= I_1 +
I_2 + I_3 \text{ (say) }.
\end{align*}
Now, we can bound the last term as follows. Observe that using
Cauchy-Schwarz inequality,
\begin{align*}
\left|I_3\right| &\leq \int_{0}^1 \left|\langle \nabla f(\xb^* + t(\xb
- \xb^*)) - \nabla f(\xb^*), \xb -\xb^* \rangle\right| dt\\ &\leq
\int_{0}^1 \norm{ \nabla f(\xb^* + t(\xb - \xb^*)) - \nabla
  f(\xb^*)}\norm{ \xb -\xb^* } dt\\ &\leq 2 \sigma_{\max}(A)
\int_{0}^1 \norm{t(\xb -\xb^*)}\norm{\xb -\xb^*} dt =
\sigma_{\max}(\Ab)\norm{\xb -\xb^*}^2,
\end{align*}
where $\sigma_{\max}(\Ab)$ denotes the largest singular value of
$\Ab$. Thus, we have
\begin{align}
\label{eq:step1}
f(\xb) &= f(\xb^*) + \langle \nabla f(\xb^*), \xb - \xb^* \rangle +
\tilde{C} \norm{\xb -\xb^*}^2
\end{align}
where $|\tilde{C}| \leq \sigma_{\max}(A)$. Furthermore,
\begin{align}
\label{eq:step2}
|\langle \nabla f(\xb^*), \xb^*(N) - \xb^* \rangle| &= \left|\langle
2A(\xb^* - a), \xb^*(N) - \xb^* \rangle\right| \nonumber\\ &\leq 2
\sigma_{\max}(\Ab) (\norm{\xb^*} + \norm{\bsa}) \norm{\xb^*(N) -
  \xb^*} \nonumber \\ &\leq 2 C_1 \sigma_{\max}(\Ab) \left(
\sqrt{\frac{\tilde{b}}{\sigma_{\min}(B)}} + \norm{\bb} +
\norm{\bsa}\right) g(N),
\end{align}
where the last line inequality follows from \eqref{eq:norm_bound}.
Combining \eqref{eq:step1} and \eqref{eq:step2} the result follows.
\end{proof}
Note that the function $g$ gives us an idea about how fast $\xb^*(N)$
converges $\xb^*$. To help, identify the function $g$ we state the
following results.

\begin{lemma}
\label{lem:step1}
If $f(\xb^*) = f(\xb^*(N))$, then $\xb^* = \xb^*(N)$. Furthermore, if
$f(\xb^*) \geq f(\xb^*(N))$, then $\xb^* \in \partial\cu$ and
$\xb^*(N) \not\in \cu$, where $\cu = \cs \cap \{\xb : C\xb = c \}$ is
the feasible set for \eqref{eq:QCQPsmall}.
\end{lemma}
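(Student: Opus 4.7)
The plan is to exploit two structural facts: (i) since every tangent hyperplane used in Algorithm~\ref{algo:simulate} touches $\cs$, the bounded cover satisfies $\cs \subseteq \ct_N$, hence the feasible set of $\cp(\cx_N)$, namely $\cu_N := \ct_N \cap \{\xb : \Cb\xb = \cbb\}$, contains $\cu$; and (ii) the objective $f(\xb) = (\xb-\ab)^T\Ab(\xb - \ab)$ is strongly convex (since $\Ab$ is positive definite), so on any closed convex set it has a unique minimizer. These two facts immediately give $f(\xb^*) \ge f(\xb^*(N))$ in all cases, which is why the hypothesis of the second part should be read as strict inequality (the equality case is absorbed into the first part).

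For the first statement, I would note that $\xb^* \in \cu \subseteq \cu_N$, so $\xb^*$ is feasible for $\cp(\cx_N)$. If additionally $f(\xb^*) = f(\xb^*(N))$, then $\xb^*$ achieves the minimum of $f$ over $\cu_N$. By strong convexity the minimizer over the convex set $\cu_N$ is unique, which forces $\xb^* = \xb^*(N)$.

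For the second statement, assume strict inequality $f(\xb^*) > f(\xb^*(N))$. First, $\xb^*(N) \notin \cu$: otherwise $\xb^*(N)$ would be feasible for $\cp$ and would strictly beat $\xb^*$, contradicting the optimality of $\xb^*$ for $\cp$. Second, to show $\xb^* \in \partial\cu$ (boundary taken relative to the affine set $\{\Cb\xb = \cbb\}$, i.e.\ the quadratic constraint is active), I argue by contradiction. Suppose $\xb^*$ lies in the relative interior, so $(\xb^* - \bb)^T \Bb (\xb^* - \bb) < \tilde b$. Then the quadratic constraint is inactive at $\xb^*$, and $\xb^*$ is a local (hence, by convexity, global) minimizer of $f$ on the entire affine subspace $\{\xb : \Cb\xb = \cbb\}$. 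Since $\cu_N \subseteq \{\xb : \Cb\xb = \cbb\}$ and $\xb^* \in \cu \subseteq \cu_N$, this same $\xb^*$ minimizes $f$ over $\cu_N$ as well. Uniqueness from strong convexity then yields $\xb^* = \xb^*(N)$, contradicting the strict inequality $f(\xb^*) > f(\xb^*(N))$.

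The only subtle point — and the place where one must be careful in the write-up rather than in the mathematics — is the interpretation of $\partial\cu$: because $\cu$ is the intersection of the ellipsoid $\cs$ with a proper affine subspace, its topological boundary in $\real^n$ is all of $\cu$, so ``boundary'' must be understood relatively, as the subset of $\cu$ on which the quadratic constraint is tight. Once this is pinned down, the chain of implications above is routine and the lemma follows.
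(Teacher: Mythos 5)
Your proof is correct and rests on the same two pillars as the paper's: the inclusion $\cu \subseteq \cu_N$ (the paper writes $\cv$ for your $\cu_N$) and the uniqueness of the minimizer of a strictly convex function over a convex set. The first part is essentially identical, though you state it more directly; the paper instead derives a contradiction by showing $f$ would have to be constant on the segment joining $\xb^*$ and $\xb^*(N)$, which strict convexity forbids. For the boundary claim in the second part you diverge slightly: the paper takes the point $\tilde{\xb}$ where the segment from $\xb^*$ to $\xb^*(N)$ crosses $\partial\cu$ and shows $f(\tilde{\xb}) < f(\xb^*)$ by strict convexity, whereas you argue that a relative-interior minimizer of $\cp$ would be the global minimizer of $f$ on the whole affine set $\{\xb : \Cb\xb = \cbb\}$ and hence would coincide with $\xb^*(N)$ by uniqueness. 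Both are routine convexity arguments; yours avoids introducing the intermediate point and makes explicit the relative-interior reading of $\partial\cu$ that the paper leaves implicit (since $\cu$ has empty interior in $\RR^n$ whenever the equality constraints are nontrivial). You are also right that the conclusion $\xb^*(N) \not\in \cu$ can only hold under the strict inequality $f(\xb^*) > f(\xb^*(N))$ --- in the equality case part one gives $\xb^*(N) = \xb^* \in \cu$ --- and the paper's ``Clearly, $\xb^*(N) \not\in \cu$'' silently makes the same strictness assumption.
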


\begin{proof}
Let $\cv = \ct_N \cap \{\xb : C\xb = c \}$. It is easy to see that
$\cu \subseteq \cv$. Assume $f(\xb^*) = f(\xb^*(N))$, but $\xb^* \neq
\xb^*(N)$. Note that $\xb^* , \xb^*(N) \in \cv$. Since $\cv$ is
convex, consider a line joining $\xb^*$ and $\xb^*(N)$. For any point
$\lambda_t = t \xb^* + (1 - t) \xb^*(N)$,
\begin{align*}
f(\lambda_t) \leq t f(\xb^*) + (1-t) f(\xb^*(N)) = f(\xb^*(N)). 
\end{align*}
Thus, $f$ is constant on the line joining $\xb^*$ and $\xb^*(N)$. But,
it is known that $f$ is strongly convex since $\Ab$ is positive definite
\cite{Rockafellar70}. Thus, there exists only one unique
minimum. Hence, we have a contradiction, which proves $\xb^* =
\xb^*(N).$ Now let us assume that $f(\xb^*) \geq
f(\xb^*(N))$. Clearly, $\xb^*(N) \not\in \cu$. Suppose $\xb^* \in
\accentset{\circ}{\cu}$, the interior of $\cu$. Let $\tilde{\xb} \in
\partial\cu$ denote the point on the line joining $\xb^*$ and
$\xb^*(N)$.  Clearly, $\tilde{\xb} = t\xb^* + (1-t)\xb^*(N)$ for some
$t > 0$. Thus, $f(\tilde{\xb}) < t f(\xb^*) + (1-t) f(\xb^*(N)) \leq
f(\xb^*)$. But $\xb^*$ is the minimizer over $\cu$. Thus, we have a
contradiction, which gives $\xb^* \in \partial\cu$. This completes the
proof.
\end{proof}
\begin{lemma}
\label{lem:step2}
Following the notation of Lemma \ref{lem:step1}, if $\xb^*(N) \not\in
\cu$, then $\xb^*$ lies on $\partial\cu$ and no point on the line joining  $\xb^*$ and $\xb^*(N)$ lies in $\cs$.
\end{lemma}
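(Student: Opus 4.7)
The plan is to leverage Lemma \ref{lem:step1} directly and then exploit strong convexity of $f$ plus the inclusion $\cu \subseteq \cv$, where, following Lemma \ref{lem:step1}, $\cv = \ct_N \cap \{\xb : C\xb = c\}$ denotes the feasible set of $\cp(\cx_N)$. The key observation driving both parts of the statement is that since $\xb^*(N)$ minimizes $f$ over the larger set $\cv$ while $\xb^* \in \cu \subseteq \cv$, one automatically has $f(\xb^*(N)) \leq f(\xb^*)$.

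First I would dispense with the boundary claim. With $f(\xb^*) \geq f(\xb^*(N))$ in hand and the standing hypothesis $\xb^*(N) \notin \cu$, the second assertion of Lemma \ref{lem:step1} applies verbatim and gives $\xb^* \in \partial\cu$. Moreover, if equality $f(\xb^*) = f(\xb^*(N))$ held, the first assertion of Lemma \ref{lem:step1} would force $\xb^*(N) = \xb^*$, contradicting $\xb^* \in \cu$ versus $\xb^*(N) \notin \cu$; therefore the inequality is strict, i.e.\ $f(\xb^*(N)) < f(\xb^*)$. This strictness is the crucial quantitative input for the segment claim.

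For the second half, I would argue by contradiction. Parameterize the segment as $\yb_t = (1-t)\xb^* + t\,\xb^*(N)$ for $t \in (0,1)$, and suppose some such $\yb_t$ lies in $\cs$. Both endpoints satisfy the affine constraint $C\xb = c$, so $C\yb_t = c$ as well, and hence $\yb_t \in \cu$. By convexity of $f$ (which holds since $\Ab$ is positive definite) together with the strict inequality from the previous paragraph,
\begin{equation*}
f(\yb_t) \;\leq\; (1-t) f(\xb^*) + t\, f(\xb^*(N)) \;<\; (1-t)f(\xb^*) + t\, f(\xb^*) \;=\; f(\xb^*),
\end{equation*}
which contradicts the fact that $\xb^*$ is the minimizer of $f$ over $\cu$. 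So no interior point of the segment can belong to $\cs$, completing the proof.

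The only subtle point, rather than a genuine obstacle, is interpreting the phrase ``no point on the line joining $\xb^*$ and $\xb^*(N)$'' correctly: since $\xb^* \in \cs$ itself, the statement is really about the open segment (or, equivalently, about all points other than the endpoint $\xb^*$). Once this reading is fixed, the argument is immediate from Lemma \ref{lem:step1} and strict convexity, with no need to invoke the geometry of the tangent cover $\ct_N$ or of the sampling scheme.
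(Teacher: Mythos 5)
Your proof is correct and matches the paper's intent: the paper's own proof of this lemma is just a one-line pointer to ``a similar argument to Lemma \ref{lem:step1}'', and your argument --- convexity of $f$ along the segment, strictness of $f(\xb^*(N)) < f(\xb^*)$ obtained from the uniqueness part of that lemma, and the observation that any segment point in $\cs$ also satisfies the affine constraint and hence lies in $\cu$, contradicting optimality of $\xb^*$ --- is precisely the fleshed-out version of that argument. Your reading of ``no point on the line'' as the open segment is also the right one, since the endpoint $\xb^*$ itself lies in $\cs$.
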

\vspace{-1.5em}
\begin{proof}
Since the gradient of $f$ is linear, the result follows from a similar
argument to Lemma \ref{lem:step1}.
\end{proof}
\vspace{-0.5em}
Based on the above two results we can identify the function $g$ by considering the maximum distance of the points lying on the conic cap to the hyperplanes forming it. That is $g(N)$ is the maximum distance between a point $\xb \in \partial\cs$ and a point in $\tb \in \ct$ such the line joining $\xb$ and $\tb$ do not intersect $\cs$ and hence, lie completely within the conic section. This is highly dependent on the
shape of $\cs$ and on the cover $\ct_N$. For example, if $\cs$ is the unit circle in two dimensions, then the optimal $N$-point set are the $N$-th roots of unity. In which case, there are $N$ equivalent conic sections $\cc_1, \ldots, \cc_N$ which are created by the intersections of $\partial\cs$ with $\ct_N$. Figure \ref{fig:conic_section} elaborates these regions. 

\begin{figure}[!h]
\centering
\includegraphics[scale = 0.4]{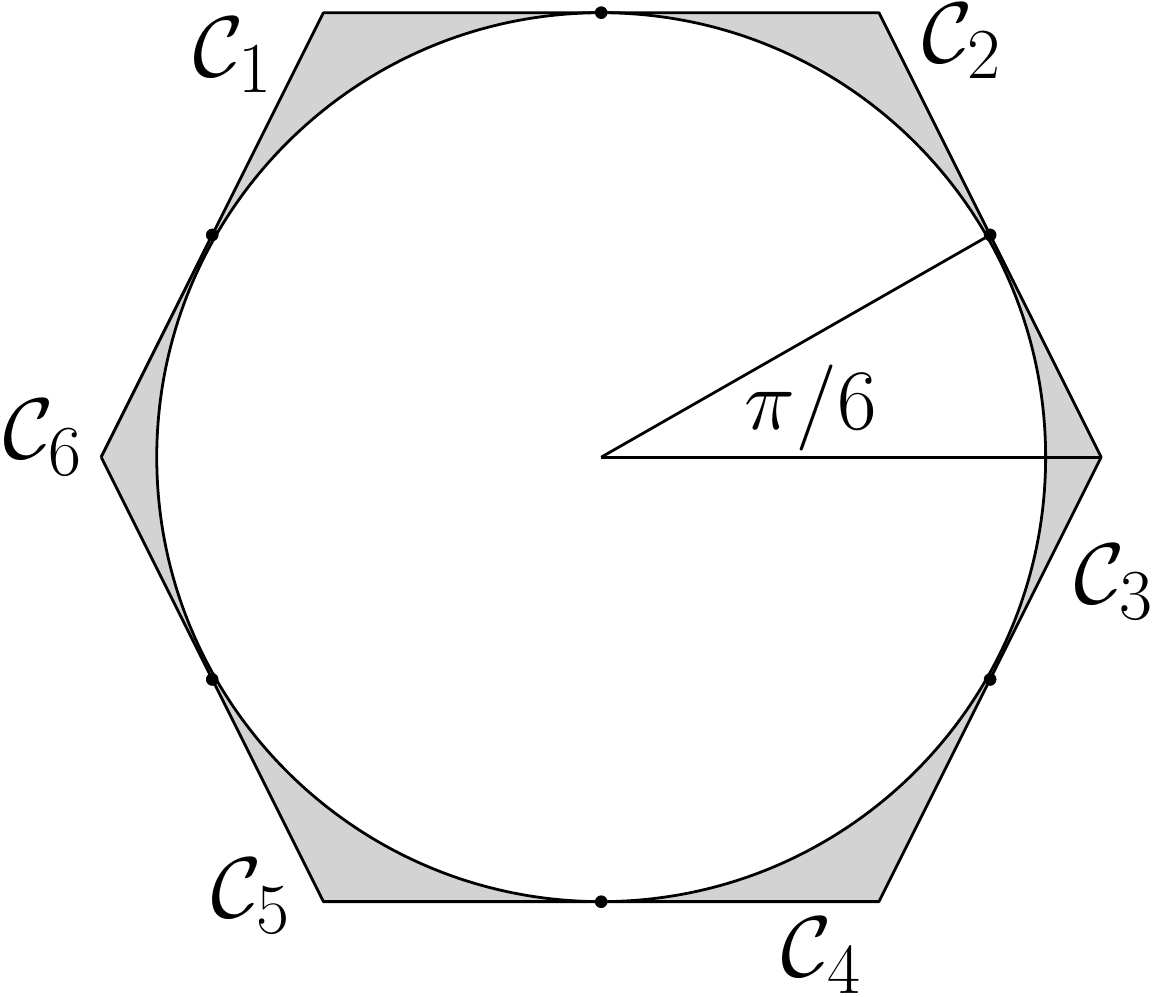}
\caption{\label{fig:conic_section} The shaded region shows the 6 equivalent conic regions, $\cc_1, \ldots, \cc_6$.}
\end{figure}

To formally define $g(N)$ in this situation, let us define $\mathcal{A}(\tb, \xb)$ to be the set of all points in the line joining $\tb \in \ct$ and $\xb \in \partial\cs$. Now, it is easy to see that,
\begin{align}
\label{eq:example}
g(N) := \max_{i=1,\ldots,N} \sup_{\tb, \xb : \mathcal{A}(\tb, \xb) \in \cc_i } \norm{\tb - \xb} = \tan\left(\frac{\pi}{N}\right)  = O\left(\frac{1}{N}\right),
\end{align}
where the bound follows from using the Taylor series expansion of $\tan(x)$. Combining this observation with Theorem \ref{thm:upp_bound} shows that in order to get an objective value within $\epsilon$ of the true optimal, we would need $N$ to be a constant multiplier of $\epsilon^{-1}$. More such results can be achieved by such explicit calculations over various different domains $\cs$.

\section{Experimental Results}
\vspace{-0.5em}
\label{sec:expt_results}

We compare our proposed technique to the current state-of-the-art
solvers of QCQP. Specifically, we compare it to the SDP and RLT
relaxation procedures as described in \cite{anstreicher2009semidefinite}. For small enough problems, we also compare our
method to the exact solution by interior point methods. Furthermore,
we provide empirical evidence to show that our sampling technique is
better than other simpler sampling procedures such as uniform sampling
on the unit square or on the unit sphere and then mapping it
subsequently to our domain as in Algorithm \ref{algo:simulate}. We
begin by considering a very simple QCQP for the form
\begin{equation}
\label{test_qcqp}
\begin{aligned}
& \underset{\xb}{\text{Minimize}} & &  \xb^T \Ab \xb\\
& \text{subject to}&  &(\xb - \xb_0)^T \Bb (\xb - \xb_0)  \leq \tilde{b},\\
& & & \lb \leq \xb \leq \ub.
\end{aligned}
\end{equation}
We randomly sample $\Ab, \Bb, \xb_0$ and $\tilde{b}$ keeping the problem convex. The lower bound,
$\lb$ and upper bounds $\ub$ are chosen in a way such that they
intersect the ellipsoid. We vary the dimension $n$ of the problem and
tabulate the final objective value as well as the time taken for the
different procedures to converge in Table
\ref{tab:obj_time}. The stopping criteria throughout our simulation is same as that of Operator Splitting algorithm as presented in \cite{parikh2014block}.

\begin{table*}[!h]
\caption{\label{tab:obj_time}The Optimal Objective Value and
  Convergence Time}
\begin{center}
 \begin{tabular}{|| c | c | c | c | c |c |c ||} \hline \multirow{2}{*} {$n$} & \multirow{2}{*} {Our Method} & Sampling & Sampling  & \multirow{2}{*} {SDP} & \multirow{2}{*} {RLT} & \multirow{2}{*} {Exact} \\
 & & on $[0,1]^n$ & on $\mathbb{S}^n$ & & &\\
\hline\hline 
\multirow{2}{*} {5} & 3.00 & 2.99 & 2.95  & 3.07  & 3.08 & 3.07 \\ 
& (4.61s) & (4.74s) & (6. 11s) & (0.52s) & (0.51s) & (0.49) \\
 \hline
\multirow{2}{*} {10} & 206.85 & 205.21 & 206.5  & 252.88  & 252.88 & 252.88 \\ 
& (5.04s) & (5.65s) & (5.26s) & (0.53s) & (0.51s) & (0.51) \\
\hline
\multirow{2}{*} {20} & 6291.4 & 4507.8 & 5052.2  & 6841.6  & 6841.6 & 6841.6 \\ 
& (6.56s) & (6.28s) & (6.69s) & (2.05s) & (1.86s) & (0.54) \\
\hline
\multirow{2}{*} {50} & 99668 & 15122 & 26239 & $1.11\times10^5$  & $1.08\times10^5$ & $1.11\times10^5$ \\ 
& (15.55s) & (18.98s) & (17.32s) & (4.31s) & (2.96s) & (0.64) \\
\hline
\multirow{2}{*} {100} & $1.40\times10^6$ & 69746 & $1.24\times10^6$  & $1.62\times10^6$  & $1.52\times10^6$& $1.62\times10^6$ \\ 
& (58.41s) & (1.03m) & (54.69s) & (30.41s) & (15.36s) & (2.30s) \\
 \hline
\multirow{2}{*} {1000} & $2.24\times10^7$ & $8.34 \times 10^6$ & $9.02\times10^6$  & \multirow{2}{*} {NA}  & \multirow{2}{*} {NA} & \multirow{2}{*} {NA} \\ 
& (14.87m) & (15.63m) & (15.32m) &  &  & \\
\hline
\multirow{2}{*} {$10^5$} & $3.10\times10^8$ & $7.12 \times 10^7$ & $ 8.39\times10^7$  & \multirow{2}{*} {NA}  & \multirow{2}{*} {NA} & \multirow{2}{*} {NA} \\ 
& (25.82m) & (24.59m) & (27.23m) &  &  & \\
 \hline
\multirow{2}{*} {$10^6$} & $3.91\times10^9$ & $2.69 \times 10^8$ & $7.53\times10^8$  & \multirow{2}{*} {NA}  & \multirow{2}{*} {NA} & \multirow{2}{*} {NA} \\ 
& (38.30m) & (39.15m) & (37.21m) &  &  & \\
 \hline
 \end{tabular}
\end{center}
\end{table*}

Throughout our simulations, we have chosen $\eta =
2$ and the number of optimal points as $N = \max (1024, 2^m) $, where
$m$ is the smallest integer such that $2^m \geq 10n$. Note that even
though the SDP and the interior point methods converge very
efficiently for small values of $n$, they cannot scale to values of $n
\geq 1000$, which is where the strength of our method becomes
evident. From Table \ref{tab:obj_time} we observe that the relaxation
procedures SDP and RLT fail to converge within an hour of computation
time for $n \geq 1000$, whereas all the approximation procedures can
easily scale up to $n = 10^6$ variables. Moreover, since the $\Ab, \Bb$ were randomly sampled, we have seen that the true optimal solution occurred at the boundary. Therefore, relaxing the constraint to linear forced the solution to occur outside of the feasible set, as seen from the results in Table \ref{tab:obj_time} as well as from Lemma \ref{lem:step1}. However, that is not a concern, since increasing $N$ will definitely bring us closer to the feasible set. The exact choice of $N$ differs from problem to problem but can be computed as we did with the small example in \eqref{eq:example}. 
Finally, the last column in Table \ref{tab:obj_time} is obtained by solving the problem using \texttt{cvx} in MATLAB using via SeDuMi and SDPT3, which gives the true $\xb^*$. 

Furthermore, our procedure gives the best approximation result when
compared to the remaining two sampling schemes. Lemma \ref{lem:step1}
shows that if the both the objective values are the same then we
indeed get the exact solution. To see how much the approximation
deviates from the truth, we also plot the log of the relative squared error,
i.e. $\log (\norm{\xb^*(N) - \xb^*}^2/\norm{\xb^*}^2)$ for each of the sampling
procedures in Figure \ref{fig:err}. Throughout this simulation, we keep $N$ fixed at 1024. This is why we see that the error level increases with the increase in dimension.
\begin{figure}[!h]
\centering
\includegraphics[scale = 0.6]{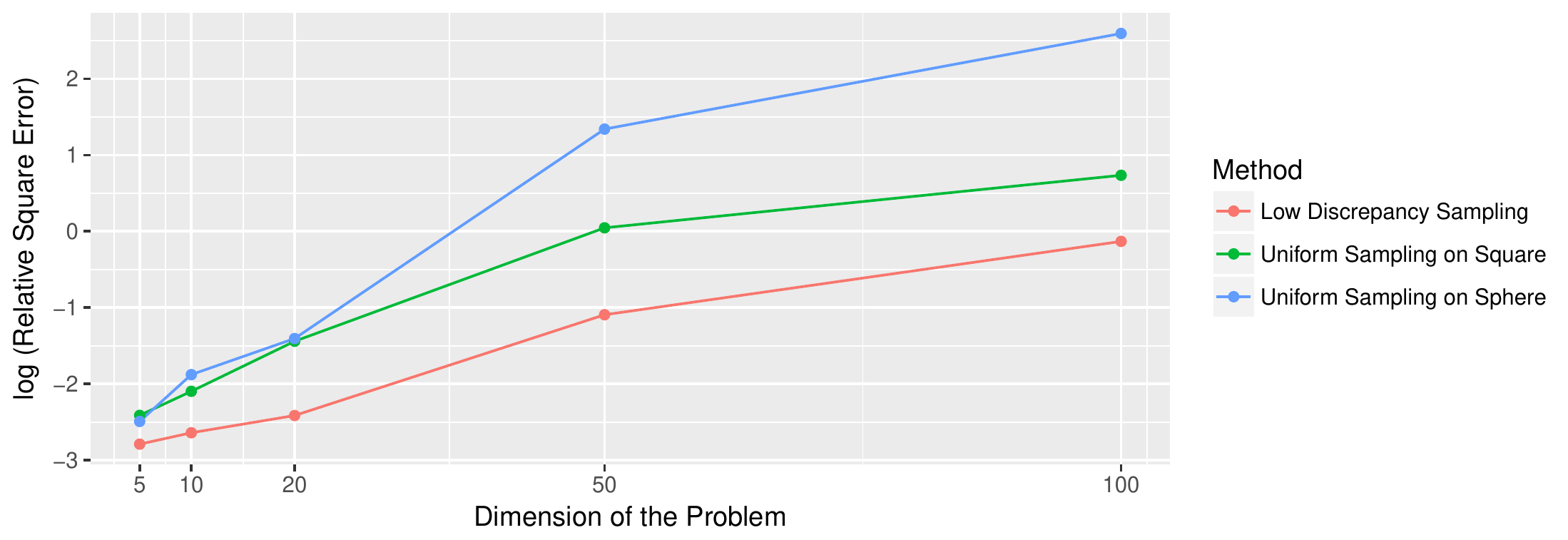}
\caption{\label{fig:err} The log of the relative squared error $\log (\norm{\xb^*(N) - \xb^*}^2/\norm{\xb^*}^2)$ with $N$ fixed at 1024 and varying dimension $n$.}
\end{figure}
We omit SDP and RLT results in
Figure \ref{fig:err} since both of them produce a solution very close
to the exact minimum for small $n$.  If we grow this $N$ with the dimension, then we see that the increasing trend vanishes and we get much more accurate results as seen in Figure \ref{fig:increasingN}. We plot both the log of relative squared error as well as the log of the feasibility error, where the feasibility error is defined as
$$
\text{ Feasibility Error } = \left((\xb^*(N) - \xb_0)^T \Bb (\xb^*(N) - \xb_0)  - \tilde{b}\right)_+ 
$$
where $(\xb)_+$ denotes the positive part of $\xb$.
\begin{figure}[!h]
\centering
\includegraphics[width=\linewidth]{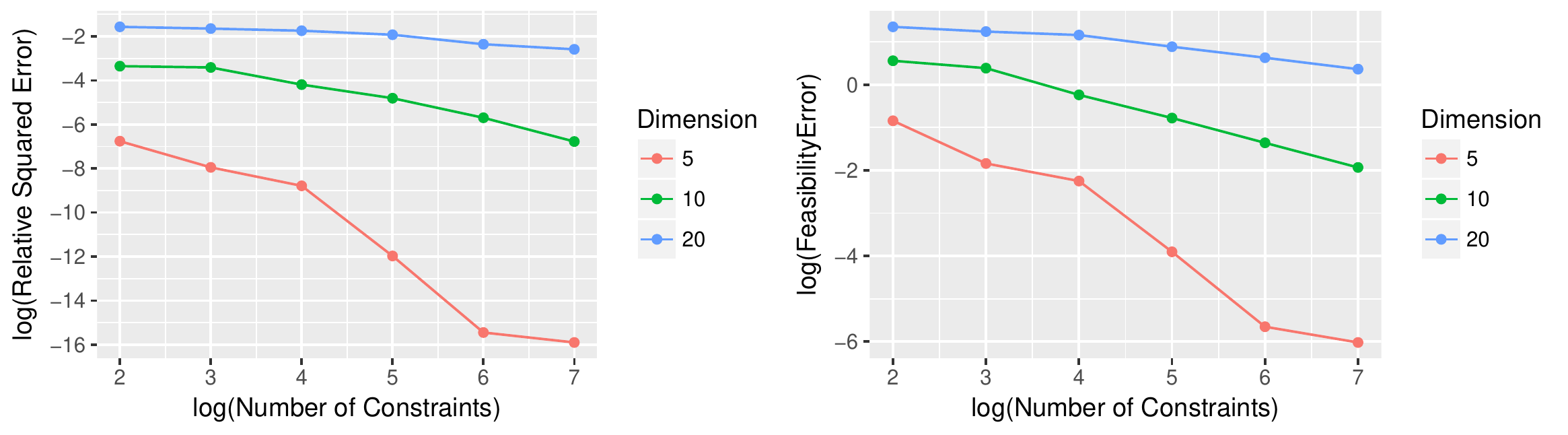}
\caption{\label{fig:increasingN} The plot on the left panel and the right panel shows the decay in the relative squared error and the feasibility error respectively, for our method, as we increase $N$ for various dimensions.}
\end{figure}

 From these results, it is clear that our procedure gets the smallest
relative error compared to the other sampling schemes, and increasing $N$ brings us closer to the feasible set, with better accurate results.

\section{Discussion and Future Work}
\vspace{-0.5em}
\label{sec:disc}
In this paper, we look at the problem of solving a large scale QCQP
problem by relaxing the quadratic constraint by a near-optimal sampling
scheme. This approximate method can scale up to very large problem sizes, while
generating solutions which have good theoretical properties of
convergence. Theorem \ref{thm:upp_bound} gives us an upper bound as a function of $g(N)$, which can be explicitly calculated for different problems. To get the rate as a function of the dimension $n$, we need to understand how the maximum and minimum eigenvalues of the two matrices $A$ and $B$ grow with $n$. One idea is to use random matrix theory to come up with a probabilistic bound. Because of the nature of complexity of these problems, we believe they deserve special attention and hence we leave them to future work. We also believe that this technique can be immensely important in several applications. Our next step is to do a detailed study where we apply this technique on some of these applications and empirically compare it with other existing large-scale commercial solvers such as CPLEX and ADMM based techniques for SDP.

\section*{Acknowledgment}
\vspace{-0.5em}
We would sincerely like to thank the anonymous referees for their helpful comments which has tremendously improved the paper. We would also like to thank Art Owen, Souvik Ghosh, Ya Xu and Bee-Chung Chen for the helpful discussions.
\vspace{-1em}
\def\bibfont{\small}
\bibliographystyle{abbrv}
\bibliography{qmcQCQP}

\end{document}